\newtheorem{mydef}{Definition}
\newtheorem{theorem}{Theorem}
\title{Decomposition and Identification of Linear Structural Equation Models} 
\author{Bryant Chen}
\begin{document}

\maketitle

%
%

\begin{abstract}
In this paper, we address the problem of identifying linear structural equation models. We first extend the edge set half-trek criterion to cover a broader class of models. We then show that any semi-Markovian linear model can be recursively decomposed into simpler sub-models, resulting in improved identification power. Finally, we show that, unlike the existing methods developed for linear models, the resulting method subsumes the identification algorithm of non-parametric models.
\end{abstract}

\section{Introduction}

Many researchers, particularly in economics, psychology, and the social sciences, use linear structural equation models (SEMs) to describe the causal and statistical relationships between a set of variables,  predict the effects of interventions and policies, and to estimate parameters of interest.  When modeling using linear SEMs, researchers typically specify the causal structure (i.e. exclusion restrictions and independence restrictions between error terms) from domain knowledge, leaving the structural coefficients (representing the strength of the causal relationships) as free parameters to be estimated from data.  If these coefficients are known, then total effects, direct effects, and counterfactuals can be computed from them directly \citep{pearl:09,chen:pea14}.  However, in some cases, the causal assumptions embedded in the model are not enough to uniquely determine one or more coefficients from the probability distribution, and therefore, cannot be estimated using data.  In such cases, we say that the coefficient is \emph{not identified} or \emph{not identifiable}\footnote{We will also use the term ``identified" with respect to individual variables and the model as a whole.  When the structural equation for a given variable contains one or more unidentified coefficients then we say the variable is unidentified.  Similarly, when a model contains one or more unidentified coefficients than the model is not identified.}.  

Many SEM researchers determine the identifiability of the model by submitting the specification and data to software, which attempts to estimate the coefficients by minimizing a fitting function \citep{kenny2011identification}.  If the model is not identified, then the program will be unable to complete the estimation and warns that the model may not be identified.   While convenient, there are disadvantages to using typical SEM software to determine model identifiability \citep{kenny2011identification,chen:pea14}.  If poor starting values are chosen, the program could mistakenly conclude the model is not identified when in fact it may be identified.  When the model is not identified, the program is not helpful in indicating which parameters are not identified nor are they able to provide estimates for identifiable coefficients\footnote{According to \cite{kenny2011identification}, AMOS is the only program that attempts to estimate parameters when the model is not identified.}.  Most importantly, the program only gives an answer after the researcher has taken the time to collect data.  

Rather than determining the identifiability of parameters by attempting to fit the model, we will detect identifiability directly from the model specification and express identified parameters in terms of the population covariance matrix.  As a result, the modeler can estimate their values, even when the model, as a whole, is not identified.  Additionally, we avoid issues of poor starting values and can determine the identifiability of parameters prior to collecting data.

As compact and transparent representations of the model's structure, causal graphs provide a convenient tool to aid in the identification of coefficients.  First utilized as a causal inference tool by \cite{wright:21}, graphs have more recently been applied to identify causal effects in non-parametric causal models \citep{pearl:2k} and enabled the development of causal effect identification algorithms that are complete for non-parametric models \citep{tian:02,huang:val06,shpitser:pea06c}.  These algorithms can be applied to the identification of coefficients in linear SEMs by identifying non-parametric direct effects, which are closely related to structural coefficients \citep{tian2005identifying,chen:pea14}.  Algorithms designed specifically for the identification of linear SEMs were developed by \cite{brito:pea02a}, \cite{brito:04}, \cite{tian2005identifying,tian:07a,tian2009parameter}, \cite{foygel:12}, and \cite{chen:etal14}.

Surprisingly, none of the aforementioned identification methods subsume the algorithms designed for non-parametric models when naively applied to the identification of linear SEMs\footnote{The coefficient $f$ in Figure \ref{fig:decomp} is identified using non-parametric algorithms but not the aforementioned linear algorithms.}, despite the added assumption of linearity.  In this paper, we first extend the half-trek identification methods of \cite{foygel:12} and \cite{chen:etal14}.  Our extensions can be applied to both semi-Markovian and non-Markovian models.  We then demonstrate how recursive c-component decomposition, which was first utilized in identification algorithms for semi-Markovian non-parametric models \citep{tian:02,huang:val06,shpitser:pea06c}, can be utilized for semi-Markovian linear models.  We show that applying recursive decomposition to our identification method allows the identification of additional coefficients and models.  Further, we will demonstrate that this method subsumes the aforementioned non-parametric identification methods when applied to linear SEMs.  Lastly, we note that our method is not complete and there are identifiable coefficients in certain models that cannot be identified using the methods given.

\section{Preliminaries}


A linear structural equation model consists of a set of equations of the form, 
\begin{equation*}
X = \Lambda X+ \epsilon,
\end{equation*}
where $X = [x_1 , ... , x_n]^t$ is a vector containing the model variables, $\Lambda$ is a matrix containing the \emph{coefficients} of the model, which convey the strength of the causal relationships, and $\epsilon = [\epsilon_1 , ..., \epsilon_n]^{t}$ is a vector of error terms, which represents omitted or latent variables.  The matrix $\Lambda$ contains zeroes on the diagonal, and $\Lambda_{ij} = 0$ whenever $x_i$ is not a cause of $x_j$.  The error terms are random variables and induce the probability distribution over the model variables.  The covariance matrix of $X$ will be denoted by $\Sigma$ and the covariance matrix over the error terms, $\epsilon$, by $\Omega$.

An instantiation of a model $M$ is an assignment of values to the model parameters (i.e. $\Lambda$ and the non-zero elements of $\Omega$).  For a given instantiation $m_i$, let $\Sigma(m_i)$ denote the covariance matrix implied by the model and $\lambda_k (m_i)$ be the value of coefficient $\lambda_j$.  In this paper, we identify the model coefficients from the covariance matrix, $\Sigma$.  
\begin{mydef}
A coefficient, $\lambda_k$, is \emph{identified} if $\lambda_k (m_i) = \lambda_k (m_j)$ whenever $\Sigma(m_i)=\Sigma(m_j)$.
\end{mydef}
In other words, $\lambda_k$ is identified if it can be uniquely determined from the covariance matrix, $\Sigma$.

The causal graph or path diagram of an SEM is a graph, $G=(V,D,B)$, where $V$ are vertices or nodes, $D$ directed edges, and $B$ bidirected edges.  The vertices represent model variables.  Directed eges represent the direction of causality, and for each coefficient $\Lambda_{ij}\neq 0$, an edge is drawn from $x_i$ to $x_j$.  Each directed edge, therefore, is associated with a coefficient in the SEM, which we will often refer to as its structural coefficient.  The error terms, $\epsilon_i$, are not represented in the graph.  However, a bidirected edge between two variables indicates that their corresponding error terms may be statistically dependent while the lack of a bidirected edge indicates that the error terms are independent.  When the causal graph is acyclic without bidirected edges, then we say that the model is \emph{Markovian}.  Graphs with bidirected edges are \emph{non-Markovian}, while acyclic graphs with bidirected edges are additionally called \emph{semi-Markovian}.  

If a directed edge, called $(x, y)$, exists from $x$ to $y$ then we say that $x$ is a parent of $y$.  The set of parents of $y$ is denoted $Pa(y)$.  Additionally, we call $y$ the head of $(x,y)$ and $x$ the tail.  The set of tails for a set of directed edges, $E$, is denoted $Ta(E)$ while the set of heads is denoted $He(E)$.  For a node, $v$, the set of edges for which $He(E)=v$ is denoted $Inc(v)$.  If there exists a path of directed edges from $x$ to $y$ then we say that $x$ is an ancestor of $y$.  The set of ancestors of $y$ is denoted $Anc(y)$.  If $x$ is an ancestor of $y$, then $y$ is a descendant of $x$.  The set of descendants of $x$ is denoted $De(x)$.  Finally, the set of nodes connected to $y$ by a bidirected arc are called the siblings of $y$ or $Sib(y)$. 

A \emph{path} from $x$ to $y$ is a sequence of edges connecting the two vertices.  A path may go either along or against the direction of the edges.  A non-endpoint vertex $w$ on a path is said to be a \emph{collider} if the edges preceding and following $w$ on the path both point to $w$, that is, $\rightarrow w \leftarrow$, $\leftrightarrow w \leftarrow$, $\rightarrow w \leftrightarrow$, or $\leftrightarrow w \leftrightarrow$.  A vertex that is not a collider is a \emph{non-collider}.  

A path between $x$ and $y$ is said to be \emph{unblocked given a set $Z$} (possibly empty), with $x, y\notin Z$ if

\begin{enumerate}
\item every noncollider on the path is not in $Z$ and
\item every collider on the path is in $An(Z)$ \citep{pearl:09}.
\end{enumerate}
%

\section{Extending the Edge Set Half-Trek Criterion}


The half-trek criterion is a graphical condition that can be used to determine the identifiability of recursive and non-recursive linear models \citep{foygel:12}.  \cite{foygel:12} use the half-trek criterion to identify the model variables one at a time, where each identified variable may be able to aid in the identification of other variables.  If any variable is not identifiable using the half-trek criterion, then their algorithm returns that the model is not \emph{HTC-identifiable}.  Otherwise the algorithm returns that the model is identifiable.  Their algorithm subsumes the earlier methods of \cite{brito:pea02a} and \cite{brito:04}.  \cite{chen:etal14} modified the half-trek criterion (calling it the \emph{edge set half-trek criterion}) to identify \emph{connected edge sets}.  Rather than attempting to identify all of a variable's coefficients at once, they instead partition the coefficients according to connected edge sets.  As a result, if one coefficient is not identified, only its connected edge set will not be identified rather than the entire variable.  In this way, they increase the granularity of the criterion to identify additional coefficients in unidentifiable models.  In this section, we will paraphrase some preliminary definitions from \citep{foygel:12} and further extend the edge set half-trek criterion.  Without c-component decomposition, this extension will allow identification of even more coefficients in unidentifiable models.  Coupled with recursive c-component decomposition, our extended half-trek criterion allows the identification of additional models, as well.  

\subsection{Preliminary Definitions}

We begin by establishing a couple preliminary definitions around half-treks.  These definitions and illustrative examples can also be found in \cite{foygel:12} and \cite{chen:etal14}.

\begin{mydef}
\citep{foygel:12} A \emph{half-trek}, $\pi$, from $x$ to $y$ is a path from $x$ to $y$ that either begins with a bidirected arc and then continues with directed edges towards $y$ or is simply a directed path from $x$ to $y$. 
\end{mydef}
\noindent We will denote the set of nodes connected to a node, $v$, via half-treks $htr(v)$.


\begin{mydef}
\citep{foygel:12} For any half-trek, $\pi$, let \emph{Right($\pi$)} be the set of vertices in $\pi$ that have an outgoing directed edge in $\pi$ (as opposed to bidirected edge) union the last vertex in the trek.  In other words, if the trek is a directed path then every vertex in the path is a member of Right($\pi$).  If the trek begins with a bidirected edge then every vertex other than the first vertex is a member of Right($\pi$).
\end{mydef}


\begin{mydef}
\citep{foygel:12} A system of half-treks, ${\pi_1, ..., \pi_n}$, has \emph{no sided intersection} if for all $\pi_i , \pi_j \in \{\pi_1, ..., \pi_n\}$ such that $\pi_i \neq \pi_j$, Right($\pi_i$)$\cap$Right($\pi_j$)$=\emptyset$.
\end{mydef}

\begin{mydef}
\citep{chen:etal14} For an arbitrary variable, $V$, let $Pa_1 , Pa_2 , ... , Pa_k$ be the unique partition of Pa(V) such that any two parents are placed in the same subset, $Pa_i$, whenever they are connected by an unblocked path.  A \emph{connected edge set} with head $V$ is a set of directed edges from $Pa_i$ to $V$ for some $i\in \{1, 2, ..., k\}$.  
\end{mydef}


\subsection{General Half-Trek Criterion}

Having established the preliminary definitions in the previous subsection, we now give our extension of the edge set half-trek criterion.

\begin{mydef}
(General Half-Trek Criterion) Let $E$ be a set of directed edges sharing a single head $v$.  A set of variables $Y$ satisfies the \emph{general half-trek criterion} with respect to $E$, if 

\begin{enumerate}[(i)]
\item $|Y|=|E|$,
\item $Y\cap({v}\cup Sib(v))=\emptyset$, 
\item There is a system of half-treks with no sided intersection from $Y$ to $Ta(E)$, and
\item $(Pa(v)\setminus Ta(E))\cap htr(Y) = \emptyset$.
\end{enumerate}
\end{mydef}

A set of directed edges, $E$, sharing a head $v$ is identifiable if there exists a set, $Y_E$, that satisfies the general half-trek criterion (g-HTC) with respect to $E$, and $Y_E$ consists only of ``allowed" nodes.  Intuitively, a node, $y$, is not allowed if \begin{enumerate}[(i)]
\item it is half-trek reachable from $He(E)$ and the coefficients of $y$ that lie on the half-treks from $He(E)$ are not identifiable or
\item $y$ is connected to $Pa(v)\setminus Ta(E)$, and the coefficients of $y$ that lie on unblocked paths between $Pa(v)\setminus Ta(E)$ are not identifiable.
\end{enumerate}
Otherwise, $y$ is allowed.  The following definition formalizes this notion.

\begin{mydef}
\label{def:allow}
Let $E$ be the set of directed edges in a causal graph, $G$.  We say that a node, $z$, is \emph{g-HT-allowed} (or simply \emph{allowed}) for directed edges $E_v\subseteq E$ with head $v$ if 
\begin{enumerate}[(i)]
\item
\begin{enumerate} [(a)]
\item $z$ is not half-trek reachable from $v$ and 
\item $z$ is not connected to $Pa(v)\setminus Ta(E)$
\end{enumerate} 
OR 
\item there exists a partition of $E$, $ES$, such that $E_v \in ES$,  an ordering on $ES$, $\prec$, and a family of subsets $(Y_{E_i})$, one subset for each $E_i \prec E_v$, such that $Y_{E_i}$ satisfies the g-HTC with respect to $E_i$ and $E_j\prec E_k$ for $(E_j, E_k \prec E_v)$ whenever
\begin{enumerate}[(a)]
\item $He(E_j)\subseteq htr(He(E_k))\cap Y_{E_k}$ or
\item $Y_{E_k}$ is connected to $Pa(He(E_k))\setminus Ta(E_k)$ via edges in the set $Inc(Y_{E_k}) \cap E_j$
\end{enumerate}
and any directed edges belonging to $z$ that lie on a half-trek from $v$ to $z$ or lie on a path between $z$ and $Pa(He(E_k))\setminus Ta(E_k)$ belong to a set $E_z \subset ES$ ordered before $E_v$.
\end{enumerate}
\end{mydef}

If $Y$ is a set of allowed variables for $E_v$ that satisfies the half-trek criterion with respect to $E_v$, we will say that $Y$ is an \emph{g-HT-admissible} set for $E_v$.  We are now ready to use the g-HTC to identify coefficients.

\begin{theorem}
\label{thm:htID}
If a g-HT-admissible set for directed edges $E_v$ with head $v$ exists then $E_v$ is identifiable.  Further, let $Y_{E_v}=\{y_1 , ..., y_k\}$ be a g-HT-admissible set for $E_v$, $Ta(E_v)=\{p_1 , ..., p_k\}$, and $\Sigma$ be the covariance matrix of the model variables.  Define $\mathbf{A}$ as 
\begin{align}
\mathbf{A_{ij}}=\begin{cases}
[(I-\Lambda)^T \Sigma]_{y_i p_j}, & y_i \in htr(v) \mathrm{\;or\;}  y_i \mathrm{\;connected}\\
& \mathrm{to\;}Pa(v)\setminus Ta(E_v) , \\
\Sigma_{y_i p_j}, & y_i \notin htr(v)
\end{cases}
\end{align}
and  $\mathbf{b}$ as
\begin{align}
\mathbf{b_{i}}=\begin{cases}
[(I-\Lambda)^T \Sigma]_{y_i v}, & y_i \in htr(v) \mathrm{\;or\;} y_i \mathrm{\;connected}\\
& \mathrm{to\;} Pa(v)\setminus Ta(E_v), \\
\Sigma_{y_i v}, & y_i \notin htr(v)
\end{cases}
\end{align}
Then $\mathbf{A}$ is an invertible matrix and $\mathbf{A} \cdot \Lambda_{Ta(E_v),V}= \mathbf{b}$.  
\end{theorem}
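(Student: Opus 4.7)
The plan is to produce one linear equation per element of $Y_{E_v}$, stack them into the system $\mathbf{A}\Lambda_{Ta(E_v),v}=\mathbf{b}$, and then argue that $\mathbf{A}$ has full rank. The starting point is the structural equation at $v$,
\[
x_v \;=\; \sum_{p\in Pa(v)} \Lambda_{p,v}\,x_p \;+\; \epsilon_v,
\]
together with the identity $\epsilon=(I-\Lambda)X$, which gives the key rewriting $[(I-\Lambda)^T\Sigma]_{y_i,w}=\operatorname{Cov}(\epsilon_{y_i},x_w)$ for every variable $w$. So row $i$ of the system, up to a choice of which expression to use, is obtained by taking the covariance of the structural equation with either $x_{y_i}$ (Case~1) or $\epsilon_{y_i}$ (Case~2).

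First I would handle Case~1, where condition (i) of Definition~\ref{def:allow} applies so that $y_i\notin htr(v)$ and $y_i$ is not connected to $Pa(v)\setminus Ta(E_v)$. Multiplying the structural equation by $x_{y_i}$ and taking expectations yields $\Sigma_{y_i,v}=\sum_{p\in Pa(v)}\Lambda_{p,v}\Sigma_{y_i,p}+\operatorname{Cov}(x_{y_i},\epsilon_v)$. The last term vanishes because $\operatorname{Cov}(x_{y_i},\epsilon_v)\neq 0$ would require a half-trek from $v$ to $y_i$, contradicting $y_i\notin htr(v)$; and $\Sigma_{y_i,p}=0$ for every $p\in Pa(v)\setminus Ta(E_v)$ because no unblocked path between $y_i$ and $p$ exists, leaving precisely the desired row $\sum_{p\in Ta(E_v)}\Lambda_{p,v}\Sigma_{y_i,p}=\Sigma_{y_i,v}$. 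For Case~2 I would compute $\operatorname{Cov}(\epsilon_{y_i},x_v)=\sum_{p\in Pa(v)}\Lambda_{p,v}\operatorname{Cov}(\epsilon_{y_i},x_p)+\operatorname{Cov}(\epsilon_{y_i},\epsilon_v)$. The cross-error term vanishes by condition (ii) of the g-HTC since $y_i\notin\{v\}\cup Sib(v)$, while for $p\in Pa(v)\setminus Ta(E_v)$ we have $p\notin htr(Y)$ by condition (iv), so $\operatorname{Cov}(\epsilon_{y_i},x_p)$ (a sum over half-treks from $y_i$ to $p$) is zero. The critical subtlety here is that forming $[(I-\Lambda)^T\Sigma]_{y_i,\cdot}=\Sigma_{y_i,\cdot}-\sum_{k\in Pa(y_i)}\Lambda_{k,y_i}\Sigma_{k,\cdot}$ requires knowing the coefficients on those edges of $y_i$ that lie on a half-trek from $v$ or on a path to $Pa(v)\setminus Ta(E_v)$; this is exactly what clause (ii) of Definition~\ref{def:allow} supplies, via the ordering $\prec$ on $ES$ that places the relevant edge sets $E_z$ strictly before $E_v$, together with an inductive hypothesis that edge sets ordered before $E_v$ are already identified.

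The main obstacle is showing $\mathbf{A}$ is invertible, and for this I would adapt the Gessel–Viennot style trek-monomial argument of \citep{foygel:12,chen:etal14}. Using $\Sigma=(I-\Lambda)^{-1}\Omega(I-\Lambda)^{-T}$, each entry of $\mathbf{A}$ can be expanded as a polynomial in the structural parameters, where $\Sigma_{y_i,p_j}$ enumerates treks between $y_i$ and $p_j$ and $[(I-\Lambda)^T\Sigma]_{y_i,p_j}=\operatorname{Cov}(\epsilon_{y_i},x_{p_j})$ enumerates only half-treks from $y_i$ to $p_j$. Expanding $\det(\mathbf{A})$ by rows produces a sum indexed by permutations $\sigma$ of monomials corresponding to systems of (half-)treks from $Y$ to $Ta(E_v)$. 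The no-sided-intersection system guaranteed by condition (iii) of the g-HTC yields a collection of monomials whose leading term cannot be cancelled by any other permutation, so $\det(\mathbf{A})$ is a nonzero polynomial and hence nonvanishing on a Zariski-open set of parameters. This establishes generic invertibility of $\mathbf{A}$, which together with the linear system derived above gives the closed-form expression $\Lambda_{Ta(E_v),v}=\mathbf{A}^{-1}\mathbf{b}$ in terms of $\Sigma$ and previously identified coefficients, proving that $E_v$ is identifiable.
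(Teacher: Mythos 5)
Your proposal is correct and follows essentially the same route as the paper, which simply defers to the proof of Theorem~1 in \citep{foygel:12} and explains why the two modifications (excluding nodes half-trek connected to $Pa(v)\setminus Ta(E_v)$, and requiring prior identification of the relevant coefficients of $y_i$ via the ordering in Definition~\ref{def:allow}) preserve that argument. Your writeup actually spells out more of the underlying mechanics (the covariance equations, the vanishing cross-terms, and the trek-monomial invertibility argument) than the paper's own proof does, but the substance is the same.
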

\begin{proof}
The proof for this theorem is similar to the proof of Theorem 1 in \cite{foygel:12}.  Rather than giving a complete proof, we simply explain why our changes are valid. The g-HTC identifies arbitrary sets of directed edges belonging to a node rather than all of the directed edges belonging to a node.  It is able to do this because of two changes.  First, sets that contain nodes that are connected to $Pa(v) \setminus Ta(E)$ via half-treks cannot be half-trek admissible for $E$ (see Definition 6).  As a result, the paths from half-trek admissible set, $Y_E$, to $v$ travel only through coefficients of $E$ and no other coefficients of $E$.  This ensures that $\mathbf{A}\cdot \Lambda_{Ta(E),v} =\mathbf{b}$.  Second, nodes that are connected to $Pa(v)\setminus Ta(E)$ are not allowed unless their coefficients that lie on paths to $Pa(v)\setminus Ta(E)$ are identified.  Likewise, nodes that are half-trek reachable from $v$ are not allowed unless their coefficients that lie on the half-treks from $v$ are identified.  This ensures that $\mathbf{A}$ and $\mathbf{b}$ are computable.  Other coefficients need not be identified because they will vanish from $\mathbf{A}$ and $\mathbf{b}$ during the computations, $((I-\Lambda)^T \cdot \Sigma)_{y_i p_j}$ and $((I-\Lambda)^T \cdot \Sigma)_{y_i v}$, due to zeroes in the matrix $\Sigma$.
\end{proof}


The g-HTC impoves upon the edge set HTC because subsets of connected edge sets may be identifiable even when the connected edge set as a whole is not.  

\begin{theorem}
\label{thm:g-HTC}
Any coefficient identifiable using the edge set HTC is also identifiable using the g-HTC.  Moreover, the g-HTC is able to identify coefficients in certain models that the edge set HTC is not.
\end{theorem}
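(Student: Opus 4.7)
The plan is to prove the two assertions in turn.

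For the subsumption direction, the approach is to show that whenever the edge set HTC identifies a connected edge set $E$ with head $v$ via a witness $Y$, the same pair $(E,Y)$ satisfies the g-HTC, and that an entire edge-set HTC identification sequence carries over verbatim. Conditions (i)-(iii) of the g-HTC coincide with the corresponding conditions of the edge set HTC, so they transfer directly. The new condition (iv) requires that $Y$ not half-trek-reach any element of $Pa(v)\setminus Ta(E)$; I would argue this from two observations: a half-trek from $y$ to $p$ is itself an unblocked path (it contains no colliders), and by the defining property of connected edge sets the parents in $Pa(v)\setminus Ta(E)$ are not joined to $Ta(E)$ by any unblocked path. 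Combined with the sibling-avoidance and admissibility restrictions already imposed by the edge set HTC, this rules out the offending half-treks. The g-HT-allowability recursion of Definition \ref{def:allow} is furthermore a relaxation of the edge set HTC's recursion, since it partitions at the finer granularity of arbitrary subsets of $Inc(v)$ rather than only connected edge sets, so an edge-set-HTC-allowed witness remains g-HT-allowed with the same ordering.

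For the strict-improvement part, the approach is to exhibit a small graph in which some proper subset $E' \subsetneq E$ of a connected edge set $E$ is g-HTC-identifiable although $E$ itself is not edge-set HTC-identifiable. The construction follows the recipe: arrange for one particular edge of $E$ to force any single candidate witness for all of $E$ to violate either (iv) or the no-sided-intersection property, while the remaining subset $E'$ admits a clean witness. Once such a graph is in hand, the g-HTC identifies $E'$ and Theorem \ref{thm:htID} writes its coefficients in closed form. A three- or four-variable example with a bidirected arc is sufficient, and its verification is by direct inspection.

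The principal obstacle is the verification of condition (iv) in Part 1. The g-HTC states it explicitly, whereas the edge set HTC enforces it only implicitly through the partition-block definition of connected edge sets together with the admissibility recursion. Tracing, for each witness node, any hypothetical half-trek into $Pa(v)\setminus Ta(E)$ and showing that it contradicts the edge-set HTC's admissibility is the most delicate step; once this lemma is in place the rest of Part 1 is bookkeeping, and Part 2 reduces to exhibiting a clean example.
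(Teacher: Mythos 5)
Your overall strategy mirrors the paper's (transfer the edge-set-HTC witness for the subsumption claim, exhibit an example for the strict-improvement claim), but both halves have genuine gaps. For Part 1, the step you yourself flag as most delicate --- deriving condition (iv) of the g-HTC for an edge-set-HTC witness --- is both where your sketch would fail and where no work is actually needed. The argument you outline shows that $Pa(v)\setminus Ta(E)$ is not joined to $Ta(E)$ by an unblocked path (true by the definition of connected edge sets), but condition (iv) is the statement $(Pa(v)\setminus Ta(E))\cap htr(Y)=\emptyset$, a constraint on the witness set $Y$, not on $Ta(E)$. Attempting to bridge the two by concatenating a hypothetical half-trek from $y\in Y$ to some $p\in Pa(v)\setminus Ta(E)$ with the half-trek from $y$ into $Ta(E)$ guaranteed by condition (iii) breaks down when both half-treks leave $y$ along bidirected edges: the concatenated path then has a collider at $y$, is blocked given the empty set, and so yields no contradiction with the parent partition. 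The paper sidesteps all of this because, when $E$ is a connected edge set, the g-HTC and the edge set HTC impose the same conditions --- the g-HTC only drops the requirement that $E$ be a connected edge set, and Definition \ref{def:allow} only refines the granularity of the allowed-node recursion --- so the subsumption is immediate and no lemma about condition (iv) is required.

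For Part 2, the claim is existential, so the example \emph{is} the proof; describing the recipe for a counterexample without constructing and verifying one leaves that half unproven. The paper supplies two concrete graphs (Figures \ref{fig:ID} and \ref{fig:ID2}): in the first, $b$ lies in the connected edge set $\{V_2\rightarrow V_3,\ V_4\rightarrow V_3\}$ whose other coefficient $d$ is unidentifiable, so the edge set HTC --- which must identify the whole connected edge set at once --- fails, while $V_1$ is g-HT-admissible for $\{b\}$ alone; the second shows the same phenomenon where identifying a subset first ($b$ via $V_2$) unlocks a further coefficient ($a$ via $V_1$). Note also that showing the edge set HTC fails requires checking that \emph{no} admissible witness exists for the full connected edge set, including witnesses made available by the recursive allowed-node mechanism, not merely that one natural candidate violates (iv) or the no-sided-intersection property.
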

\begin{proof}
First, we note that when the set of edges in question is a connected edge set, then the edge set HTC and the g-HTC are equivalent.  As a result, any set of edges identifiable using the edge set HTC is identifiable using the g-HTC, and we have proven the first part of the Theorem.  

We will now use Figures \ref{fig:ID} and \ref{fig:ID2} to show that certain coefficients that are not identifiable using the edge set HTC are identifiable using the g-HTC.  The edge set HTC identifies coefficients associated with connected edge sets altogether.  As a result, the coefficient $b$ in the graph depicted in Figure \ref{fig:ID} is not identifiable using the edge set HTC since it belongs to the connected edge set $\{V_2\rightarrow V_3 , V_4\rightarrow V_3\}$ and the coefficient $d$ is not identifiable.  However, $b$ is identifiable using the g-HTC since $V_1$ is g-HT-admissible for $b$. 

Additionally, coefficient $a$ in Figure \ref{fig:ID2} is not identifiable using the edge set half-trek criterion due to the fact that $c$ is not identifiable
.  Using the g-HTC and Theorem \ref{thm:htID}, we can first identify $b$ since $V_2$ is a g-HT-admissible set for $b$.  Once $b$ is identified, we can use $Y_a = V_1$ to identify $a$.
\end{proof}

\begin{figure}
\centering
\begin{subfigure}[t]{0.23\textwidth}
\includegraphics[width=\textwidth]{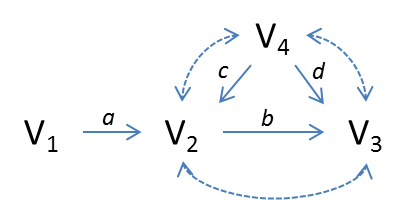}
\caption{}
\label{fig:ID}
\end{subfigure}
\begin{subfigure}[t]{0.17\textwidth}
\includegraphics[width=\textwidth]{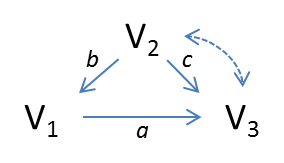}
\caption{}
\label{fig:ID2}
\end{subfigure}
\caption{(a) $b$ is identified using the g-HTC but not the edge set HTC (b) $a$ is identified using the g-HTC but not the edge set HTC}
\end{figure}

\subsection{g-HTC Algorithm}

Finding a g-HT-admissible set for directed edges, $E$, with head, $v$, from a set of allowed nodes, $A_E$, can be accomplished by utilizing the max-flow algorithm described in \cite{chen:etal14}\footnote{\cite{brito:04} utilized a similar max-flow construction in his identification algorithm.  This was later modified by \cite{foygel:12} for the half-trek criterion and finally \cite{chen:etal14} for the edge set half-trek criterion. }, which we call $\mathrm{MaxFlow}(G_f (E,A))$.  This algorithm returns a maximal set of allowed nodes that satisfies (ii) - (iv) of the g-HTC. 
%
%

In some cases, there may be no g-HT-admissible set for $E^{'}$ but there may be one for $E\subset E^{'}$.  In other cases, there may be no g-HT admissible set of variables for a set of edges $E$ but there may be a g-HT admissible set of variables for $E^{'}$ with $E\subset E^{'}$.  As a result, if a HT-admissible set does not exist for $E_v$, where $E_v = Inc(v)$ for some node $v$, we may have to check whether such a set exists for all possible subsets of $E_v$ in order to identify as many coefficients in $E_v$ as possible.  This process can be somewhat simplified by noting that if $E$ is a connected edge set with no g-HT-admissible set, then there is no superset $E^{'}$ with a g-HT-admissible set.  

Algorithm \ref{alg:ID} utilizes the g-HTC and Theorem \ref{thm:htID} to identify as many coefficients in a recursive or non-recursive linear SEM as possible.  It iterates through every unidentified connected edge set in the graph, attempting to identify each unidentified subset of its edges using Theorem \ref{thm:htID}.  For a given set of directed edges, $E$, the initial set of HT-allowed nodes is the set of nodes that is not half-trek reachable from $He(E)$.  As more and more edges are identified, the set of allowed nodes increases so that a set of edges that was not initially identified may become identifiable as the algorithm iterates.  We define $\mathrm{Allowed}(E, \mathrm{IDEdges},G)$ to be the set of nodes that are either (i) not half-trek reachable from $He(E)$ and not connected to $Pa(He(E))\setminus Ta(E)$ or (ii) their edges that lie on half-treks from $He(E)$ or paths to $Pa(He(E))\setminus Ta(E)$ are in $\mathrm{IDEdges}$.  As a result, Algorithm \ref{alg:ID} attempts to identify each connected edge set (and its subsets) until all coefficients have been identified or no new coefficients are identified in a given iteration.  

\begin{algorithm}[H]
\caption{HT-ID($G,\Sigma, \mathrm{IDEdges}$)}
\label{alg:ID}
\begin{algorithmic}
\State {\bfseries Initialize:} $\mathrm{EdgeSets}\leftarrow$ all connected edge sets in $G$
\Repeat
	\ForAll{$ES$ in $\mathrm{EdgeSets}$}
		\State$v\leftarrow He(ES)$
			\ForAll{$E\subset ES$ such that $E\not\subset \mathrm{IDEdges}$}
				\State $A_E \leftarrow \mathrm{Allowed}(E, \mathrm{IDEdges}, G)$
				\State $Y_E \leftarrow \mathrm{MaxFlow}(G_f (v,A_E))$
				\If{$|Y_E|=|Ta(E)|$}
					\State Identify $E$ using Theorem \ref{thm:htID}
					\State $\mathrm{IDEdges} \leftarrow \mathrm{IDEdges} \cup E$
				\EndIf
			\EndFor
	\EndFor
\Until{All coefficients have been identified or no coefficients have been identified in the last iteration}\\
\Return IDEdges
\end{algorithmic}
\end{algorithm}

\section{Decomposition}
\label{sec:c-comp}

\begin{figure}
\centering
\begin{subfigure}[b]{.35\textwidth}
\includegraphics[width=\textwidth]{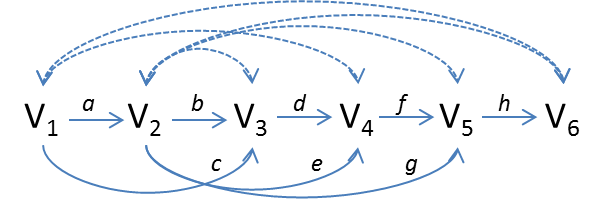}
\caption{}
\label{fig:decomp}
\end{subfigure}
\begin{subfigure}[b]{.3\textwidth}
\includegraphics[width=\textwidth]{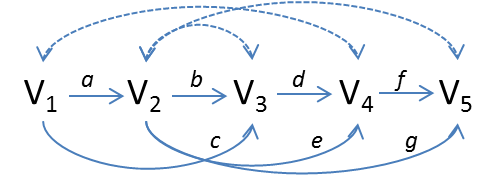}
\caption{}
\label{fig:decomp_marg}
\end{subfigure}

\begin{subfigure}[b]{.3\textwidth}
\includegraphics[width=\textwidth]{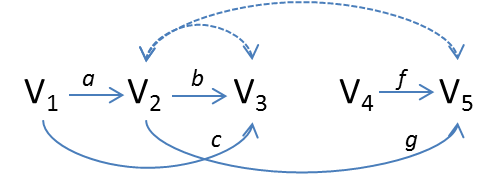}
\caption{}
\label{fig:decomp1}
\end{subfigure}
\begin{subfigure}[b]{.235\textwidth}
\includegraphics[width=\textwidth]{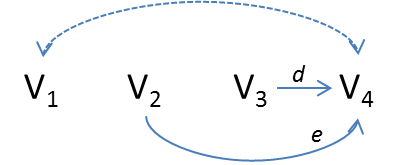}
\caption{}
\label{fig:decomp2}
\end{subfigure}
\caption{(a) The graph is not identified using the g-HTC and cannot be decomposed (b) After removing $V_6$ we are able to decompose the graph (c) Graph for c-component, $\{V_2 , V_3 , V_5\}$ (d) Graph for c-component, $\{V_1 , V_4\}$}
\end{figure}
%

\cite{tian2005identifying} showed that the identification problem could be simplified in semi-Markovian linear structural equation models by decomposing the model into sub-models according to their c-components, defined below.  Each coefficient is identifiable if and only if it is identifiable in the sub-model to which it belongs.  \cite{foygel:12} noted that first decomposing the model in this way and then applying the half-trek criterion allows for the identification of a larger set of models.  In this section, we will briefly review the results of \cite{tian2005identifying} before showing that the c-component decomposition can be applied recursively to the model after marginalizing certain variables.  This idea was first used to identify interventional distributions in non-parameteric models by \cite{tian:02} and \cite{tian:pea02} and adapting this technique for linear models will allow us to identify models that the g-HTC, even coupled with (non-recursive) c-component decomposition, is unable to identify.  Further, it ensures the identification of all coefficients identifiable using methods developed for non-parametric models--a guarantee that none of the existing methods developed for linear models satisfy.  

\subsection{C-Component Decomposition}

\begin{mydef}
\citep{tian2005identifying} The \emph{c-component} of a node $v$ in a causal graph, denoted $C(v)$, is the set of nodes that is connected to $v$ by paths consisting only of bidirected arcs.
\end{mydef}

\cite{tian2005identifying} showed that a coefficient is identified if and only if it is identified in the sub-graph consisting of its c-component and the parents of the c-component.  More formally, a coefficient from $x$ to $y$ is identified if and only if it is identified in the sub-model constructed in the following way:
\begin{enumerate}[(i)]
\item The sub-model variables consist of $C(y)\cup Pa(C(y))$.
\item The structural equations for the variables in $C(y)$ are the same as their structural equations in the original model while the structural equations for the parents simply equate each parent with its error term.  
\item If the error terms of any two variables in the sub-model were uncorrelated in the original model then they are uncorrelated in the sub-model.
\end{enumerate}

When we decompose the model according to c-components, the distribution over the variables in the sub-model, $W = S\cup Pa(S)$, for a c-component $S$, is not simply the marginal distribution over $W$.  Instead, the distribution, which we denote $P_S$, is computed from the joint distribution over the model variables, $P(V)$.  Letting $V=\{v_1 , v_2 , ... , v_n\}$ be ordered topologically, we have 
\begin{equation}
\label{eq:P}
P_S (W) = \prod_{\{i|v_i \in S\}} P(v_i | V^{(i-1)}) \prod_{\{l | v_l \in Pa(S)\}} P(v_l),
\end{equation}
where $V^{(i-1)} = \{v_1 , v_2 , ... , v_{i-1}\}$.

For example, consider the graph depicted in Figure \ref{fig:decomp_marg}.  We have two c-components, $S_1 = \{v_2 , v_3 , v_5\}$ and $S_2 = \{v_1 , v_4\}$, with \begin{align*}P_{S_1} = &P(v_2 | v_1) P(v_3 | v_2 , v_1) P(v_5 | v_4 , v_3 , v_2 , v_1) \\ &P(v_1) P(v_4)\end{align*} and \begin{equation*}P_{S_2} =  P(v_1) P(v_4 | v_3 , v_2 , v_1)P(v_3)P(v_2)P(v_3).\end{equation*} 

We will denote the sub-model for a c-component, $S$, $M_S$ and the graph it induces $G_S$.  $G_{S_1}$ is depicted in Figure \ref{fig:decomp1} and $G_{S_2}$ in Figure \ref{fig:decomp2}.

\begin{theorem}
\citep{tian2005identifying} Let a variable $v_j$ be in a c-component $S_j$ in a SEM $M$.  A coefficient, $\Lambda_{ij}$, is identifiable if and only if it is identifiable in the model $M_{S_j}$.  
\end{theorem}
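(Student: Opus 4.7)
The plan is to prove both directions by exploiting the relationship between $\Sigma$ and $\Sigma_{S_j}$ that is encoded in equation (\ref{eq:P}). The forward implication is essentially a pullback argument, while the reverse implication requires constructing two full-model instantiations whose sub-model restrictions witness the failure of identifiability downstairs.

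For the forward direction (identifiable in $M_{S_j}$ implies identifiable in $M$), I would simply observe that equation (\ref{eq:P}) writes $P_{S_j}$ as a fixed deterministic function of $P(V)$, and hence the sub-model covariance matrix $\Sigma_{S_j}$ is a fixed function of $\Sigma$. Given two instantiations $m_a, m_b$ of $M$ with $\Sigma(m_a) = \Sigma(m_b)$, the instantiations they induce on $M_{S_j}$ (keep the structural equations for $S_j$, replace each parent's equation by identification with its error term, restrict $\Omega$ to the relevant block) therefore yield equal sub-model covariance matrices. Since $\Lambda_{ij}$ is by assumption a function of $\Sigma_{S_j}$, we conclude $\Lambda_{ij}(m_a) = \Lambda_{ij}(m_b)$, so $\Lambda_{ij}$ is identifiable in $M$.

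For the reverse direction I would argue the contrapositive. Suppose $\Lambda_{ij}$ is not identifiable in $M_{S_j}$, so there are sub-model instantiations $m_1', m_2'$ with $\Sigma_{S_j}(m_1') = \Sigma_{S_j}(m_2')$ but $\Lambda_{ij}(m_1') \neq \Lambda_{ij}(m_2')$. I would then construct full-model instantiations $m_1, m_2$ of $M$ that share a single fixed choice of parameters (coefficients on edges into $S$, error variances and covariances among variables in $S$) for every c-component $S \neq S_j$, and inherit the $S_j$-parameters of $m_1'$ and $m_2'$ respectively. By construction $\Lambda_{ij}(m_1) \neq \Lambda_{ij}(m_2)$, so it suffices to show $\Sigma(m_1) = \Sigma(m_2)$.

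To establish this last equality, I would lean on the c-component factorization of the observed distribution from Tian and Pearl: $P(V) = \prod_S Q[S]$, where each c-factor $Q[S]$ depends only on the parameters of the sub-model $M_S$ and in fact is a function of $P_S$. For $S \neq S_j$ the factors $Q[S]$ agree across $m_1$ and $m_2$ by the shared parameter choice, and for $S_j$ we have $P_{S_j}(m_1) = P_{S_j}(m_2)$ by the equality of $\Sigma_{S_j}(m_1')$ and $\Sigma_{S_j}(m_2')$ read at the second-moment level. Multiplying the factors gives $P(V;m_1) = P(V;m_2)$, hence $\Sigma(m_1) = \Sigma(m_2)$, contradicting identifiability of $\Lambda_{ij}$ in $M$. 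The hardest step, and the one I would write out carefully, is the covariance-level analogue of the c-component factorization, i.e.\ that matching every $P_S$ (or $\Sigma_S$) forces $\Sigma$ to match; because the paper phrases identifiability in terms of $\Sigma$ rather than $P(V)$, one must either invoke Gaussianity or verify directly, using $\Sigma = (I-\Lambda)^{-T}\Omega(I-\Lambda)^{-1}$ and the fact that error terms across distinct c-components are uncorrelated, that the shared c-component parameters determine every entry of $\Sigma$.
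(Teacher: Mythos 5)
The paper offers no proof of this theorem at all: it is stated as a cited result from Tian (2005), so there is no in-paper argument to compare yours against. Judged on its own terms, your plan follows the standard route from the literature and is sound in outline: the forward direction is indeed a pullback through the computability of $P_{S_j}$ (hence $\Sigma_{P_{S_j}}$, in the Gaussian reading) from $\Sigma$, and the reverse direction is correctly argued by contraposition via two full-model instantiations that agree outside $S_j$. You also correctly isolate the two genuinely delicate points --- that identifiability is phrased at the level of $\Sigma$ rather than $P(V)$, and that one needs a covariance-level version of the c-component factorization.

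Two cautions on the reverse direction, which is where the real work lives. First, the sub-model $M_{S_j}$ has more free parameters than the ``$S_j$-parameters'' of the full model: the parents in $Pa(S_j)\setminus S_j$ are exogenous there, so their variances (and any covariances among them) are free in $M_{S_j}$ but are \emph{derived} quantities in $M$, possibly depending on parameters of $S_j$ itself when a parent has an ancestor inside $S_j$. Your construction therefore cannot promise that the full models $m_1,m_2$ reproduce the parent marginals of $m_1',m_2'$, and you should not need them to. What actually closes the argument is that only the c-factor $Q[S_j]=\prod_{v_i\in S_j}P(v_i\mid V^{(i-1)})$ must transfer, that $Q[S_j]$ depends only on the coefficients into $S_j$ and on $\Omega_{S_j,S_j}$ (Tian and Pearl's factorization lemma), and that $Q[S_j]$ is recoverable from $P_{S_j}$ by dividing out the parent marginals --- which requires the small verification that the $v_l$-marginals of $P_{S_j}$ coincide with $P(v_l)$. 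Second, the equivalence between ``equal $\Sigma_{S_j}$'' and ``equal $P_{S_j}$, hence equal $Q[S_j]$'' uses Gaussianity (or a direct second-moment argument); since the paper's identifiability definition quantifies over all instantiations with equal $\Sigma$, you should state explicitly that it suffices to exhibit the counterexample pair within the Gaussian family. With those points written out, your proposal is a faithful reconstruction of Tian's proof.
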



\subsection{Recursive C-Component Decomposition}

The graph in Figure \ref{fig:decomp} consists of a single c-component, and we are unable to decompose it.  As a result, we are able to identify $a$ but no other coefficients using the extended half-trek criterion.  Moreover, $f = \frac{\partial}{\partial v_4}E[v_5|do(v_6, v_4 , v_3 , v_2 , v_1)]$ is identified using identification methods developed for non-parametric models (e.g. do-calculus) but not the g-HTC or other methods developed for linear models.  

However, if we remove $v_6$ from the analysis, then the resulting model can be decomposed.  Let $M$ be the model depicted in Figure \ref{fig:decomp}, $P(v)$ be the distribution induced by $M$, and $M^{'}$ be a model that is identical to $M$ except the equation for $v_6$ is removed.  $M^{'}$ induces the distribution $\int_{v_6} P(V) d v_6$, and its associated graph $G^{'}$ yields two c-components, as shown in Figure \ref{fig:decomp_marg}.  

Now, decomposing $G^{'}$ according to these c-components yields the sub-models depicted by Figures \ref{fig:decomp1} and \ref{fig:decomp2}.  Both of these sub-models are identifiable using the half-trek criterion.  Thus, all coefficients other than $h$ have been shown to be identifiable.  Returning to the graph prior to removal, depicted in Figure \ref{fig:decomp}, we are now able to identify $h$ because both $v_4$ and $v_5$ are now allowed nodes for $h$, and the model is identified\footnote{While $v_4$ and $v_5$ are technically not allowed according to Definition \ref{def:allow}, they can be used in g-HT-admissible sets to identify $h$ using Theorem \ref{thm:htID} since their coefficients have been identified.}.

As a result, we can improve our identification algorithm by recursively decomposing, using the extended half-trek criterion, and removing descendant sets\footnote{Only removing descendant sets have the ability to break up -components.  For example, removing $\{v_2\}$ from Figure \ref{fig:decomp} does not break the c-component because removing $v_2$ would relegate its influence to the error term of its child, $v_3$.  As a result, the graph of the resulting model would include a bidirected arc between $v_3$ and $v_6$, and we would still have a single c-component.}.  Note, however, that we must consider every descendant set for removal.  It is possible that removing $D_1$ will allow identification of a coefficient but removing a superset $D_2$ with $D_1 \subset D_2$ will not.  Additionally, it is possible that removing $D_2$ will allow identification but removing a subset $D_1$ will not.  

After recursively decomposing the graph, if some of the removed variables were unidentified, we may be able to identify them by returning to the original graph prior to removal since we may have a larger set of allowed nodes.  For example, we were able to identify $h$ in Figure \ref{fig:decomp} by ``un-removing" $v_6$ after the other coefficients were identified.  In some cases, however, we may need to again recursively decompose and remove descendant sets.  As a result, in order to fully exploit the powers of decomposition and the g-HTC, we must repeat the recursive decomposition process on the original model until all marginalized nodes are identified or no new coefficients are identified in an iteration.

Algorithm \ref{alg:recID} decomposes the graph according to its c-components and then applies HT-ID (Algorithm \ref{alg:ID}) to each sub-model.  If there are still unidentified coefficients, then it removes descendant sets and decomposes again.  The whole process is repeated until every coefficient is identified or no new coefficients are identified in an iteration.  $\Sigma_{P_{S_i}}$ is the covariance matrix of $P_{S_i}$, where $S_i$ is a c-component.  $\Sigma_{V\setminus D_i}$ is the covariance matrix after marginalizing $D_i$ from $\Sigma$.  Finally, $G_{V\setminus D_i}$ is the graph with the set $D_i$ removed.  


\begin{algorithm}[H]
\caption{Decomp-HT-ID($G, \Sigma$)}
\label{alg:recID}
\begin{algorithmic}
\State {\bfseries Initialize:} $\mathrm{IDEdges}\leftarrow \emptyset$
\Repeat
	\State $\mathrm{IDEdges} \leftarrow \mathrm{IDEdges}\cup$Rec-Decomp$(G, \Sigma, \mathrm{IDEdges}$)
\Until{All coefficients have been identified or no coefficients have been identified in the last iteration}
\State
\Return IDEdges

\State
\Procedure{Rec-Decomp}{$G, \Sigma, \mathrm{IDEdges}$)}
	\State $V\leftarrow$ vertices in $G$
	\State $\mathrm{Edges}\leftarrow$ all edges in $G$
	\ForAll {c-component, $S_i$, in $G$}
		\State $\mathrm{IDEdges} = \mathrm{IDEdges} \cup$ HT-ID$(G_{S_i}, \Sigma_{S_i}, \mathrm{IDEdges})$
	\EndFor
	\If{$\mathrm{IDEdges}=\mathrm{Edges}$}
		\State Return $\mathrm{IDEdges}$
	\Else
		\ForAll {descendant set, $D_i$, in $G$}
			\State $\mathrm{IDEdges}\leftarrow \mathrm{IDEdges}\cup $Rec-Decomp$(G_{V\setminus D_i} , \Sigma_{V\setminus D_i}, \mathrm{IDEdges}$
		\EndFor
	\EndIf
\State
\Return IDEdges
\EndProcedure
\end{algorithmic}
\end{algorithm}



We now show that any direct effect identifiable using non-parametric methods is also identified using Algorithm \ref{alg:recID}. 

\begin{theorem}
\label{thm:npid}
Let $M$ be a linear SEM with variables $V$.  Let $M^{'}$ be a non-parametric SEM with identical structure to $M$.  If the direct effect of $x$ on $x$ for $x,y \in V$ is identified in $M^{'}$ then the coefficient $\Lambda_{xy}$ in $M$ is identified using Algorithm \ref{alg:recID}.
\end{theorem}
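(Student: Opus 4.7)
The plan is to mirror the non-parametric Identify algorithm of \cite{tian:02} step by step using Algorithm \ref{alg:recID}. The non-parametric algorithm for identifying the direct effect from $x$ to $y$ (equivalently, for identifying $Q[\{y\}]$ from $P(V)$) alternates exactly two operations: (a) restricting the graph to an ancestral subset, which amounts to marginalizing variables outside the ancestor set, and (b) decomposing the remaining graph into c-components and focusing on the component that contains $y$. It succeeds precisely when these alternations eventually produce a subgraph in which $y$ sits alone in its c-component. Algorithm \ref{alg:recID} carries out both of these operations: the inner loop over c-components implements (b), and the recursive call on $G_{V\setminus D_i}$ for every descendant set $D_i$ implements (a), since removing a descendant set from the graph corresponds exactly to marginalizing those variables from $\Sigma$.

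I would proceed by induction on the recursion depth of the non-parametric Identify algorithm applied to the query ``identify $Q[\{y\}]$ from $P(V)$''. In the base case, $\{y\}$ is already its own c-component in some ancestral subgraph $G'$. In $G'$, the node $y$ has no bidirected edges, so the set $Pa(y)$ itself is a g-HT-admissible set for $Inc(y)$: no parent of $y$ is half-trek reachable from $y$ (since $y$ has no siblings in $G'$), and each parent trivially has a directed half-trek to itself. Therefore Theorem \ref{thm:htID} applied within the sub-model that Algorithm \ref{alg:recID} visits at this depth identifies every $\Lambda_{vy}$ with $v\in Pa(y)$, in particular $\Lambda_{xy}$. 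In the inductive step, the non-parametric algorithm first restricts $G$ to an ancestral subset $A = V \setminus D$ and then decomposes $G_A$ into c-components, recursing on the c-component $S$ containing $y$; since Algorithm \ref{alg:recID} tries every descendant set $D_i$ and then every c-component of the resulting graph in its inner \textbf{for} loops, it visits the same sub-model, and the inductive hypothesis supplies the coefficient.

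A separate step is needed to promote identifiability in a sub-model back to identifiability in the original graph $G$. For the c-component decomposition this is immediate by Theorem 5 of \cite{tian2005identifying}: a coefficient is identified in the original model iff it is identified in its c-component sub-model. For the descendant-set removal, one must argue that the outer \textbf{Repeat} loop of Algorithm \ref{alg:recID} suffices: once $\Lambda_{xy}$ becomes known inside a recursively decomposed sub-model, the next iteration sees a larger set of allowed nodes (including previously marginalized descendants whose own incoming coefficients may now be identified), so any intermediate coefficients along the way, such as $h$ in Figure \ref{fig:decomp}, can be recovered on subsequent passes via the ``un-removing'' mechanism discussed in the text.

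The main obstacle will be formalizing the correspondence between the Tian--Pearl Identify algorithm, which manipulates $Q$-factors built from $P(V)$, and Algorithm \ref{alg:recID}, which manipulates covariance matrices and edge sets. Concretely, I will need to verify that whenever the non-parametric algorithm can compute $Q[A]$ from $Q[T]$ by marginalizing a descendant set $D$, the covariance matrix $\Sigma_{V\setminus D}$ that Algorithm \ref{alg:recID} passes to the recursive call is indeed the covariance matrix of the corresponding linear sub-model, so that identifiability of $\Lambda_{xy}$ in the sub-model coincides with its identifiability in the original model. A secondary subtlety is ensuring that the exhaustive search over descendant sets in Algorithm \ref{alg:recID} never misses the particular descendant set chosen by the non-parametric algorithm's ancestral restriction step, but this follows immediately from the ``\textbf{for all} descendant set $D_i$'' loop.
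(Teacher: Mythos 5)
Your proposal reaches the same destination as the paper's proof but justifies the key step by a different route. The paper cites Theorem 3 of \citep{shpitser:08}: non-parametric identifiability of the direct effect of $x$ on $y$ is equivalent to the absence of a $y$-rooted c-tree subgraph, which it then translates into the statement that recursive c-component decomposition plus marginalization of descendant sets eventually isolates $y$ in its own c-component (the paper writes this as $MACS(y)=y$). You instead establish that terminal state by an explicit induction on the recursion depth of the Tian--Pearl Identify algorithm, checking that the exhaustive loops of Algorithm \ref{alg:recID} over descendant sets and over c-components visit every sub-model that Identify visits (the complement of an ancestral set is indeed closed under descendants, so the ``for all descendant sets'' loop covers the ancestral-restriction step). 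Both arguments then finish identically: in the terminal sub-model $y$ has no siblings, so $Pa(y)$ is g-HT-admissible for $Inc(y)$ and Theorem \ref{thm:htID} recovers $\Lambda_{Pa(y),y}$ by what amounts to a regression on the parents. Your route is more self-contained and makes explicit the correspondence (descendant-set removal versus ancestral restriction, c-component sub-model versus $Q$-factor) that the paper's one-line appeal to ``recursively decomposing and marginalizing'' leaves implicit; the paper's route is shorter because the c-tree characterization packages the entire induction into a single cited theorem. You also correctly flag the formalization burden --- matching $Q$-factor manipulations to covariance-matrix manipulations --- which the paper does not discharge either. One economy: for this theorem only the easy direction of the lifting step is needed, since identifiability of $\Lambda_{xy}$ from the marginal covariance $\Sigma_{V\setminus D}$ trivially implies identifiability from $\Sigma$ and Theorem 3 of \citep{tian2005identifying} handles the c-component step; so your discussion of recovering intermediate coefficients such as $h$ via the outer \textbf{Repeat} loop, while an accurate reading of the algorithm, is not required for identifying $\Lambda_{xy}$ itself.
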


\begin{proof}
Let $G$ be the causal graph of $M$ and $M^{'}$.  Suppose the direct effect of $x$ on $y$ is identified in $M^{'}$.  Then according to Theorem 3 of \citep{shpitser:08}, there does not exist a subgraph of $G$ that is a $y$-rooted c-tree \citep{shpitser:08}.  This implies that $MACS(y) = y$.  By recursively decomposing the graph into c-components and marginalizing descendant sets, we can obtain a graph where only $MACS(y)$ and its parents remain in the graph.  Since $MACS(Y)=y$, the parents of $y$ in this graph represent a g-HT admissible set that allows the identification of all coefficients of $y$.  
\end{proof}

\section{Conclusion}

In this paper, we extend the edge set half-trek criterion \citep{chen:etal14}. We then incorporate recursive c-component decomposition \citep{tian:02}, and show that the resulting identification method is able to identify more coefficients and models than the existing algorithms developed for both linear and non-parametric models.

\section{Acknowledgments}

I would like to thank Jin Tian and Judea Pearl for helpful comments and discussions.  This research was partly supported by NSF \#IIS-1302448. 

\bibliography{book}  

\begin{thebibliography}{17}
\expandafter\ifx\csname natexlab\endcsname\relax\def\natexlab#1{#1}\fi
\expandafter\ifx\csname url\endcsname\relax
  \def\url#1{\texttt{#1}}\fi
\expandafter\ifx\csname urlprefix\endcsname\relax\def\urlprefix{URL }\fi

\bibitem[{Brito(2004)}]{brito:04}
\textsc{Brito, C.} (2004).
\newblock \textit{Graphical methods for identification in structural equation
  models}.
\newblock Ph.D. thesis, Computer Science Department, University of California,
  Los Angeles, CA.
\newline\urlprefix\url{{$<$http://ftp.cs.ucla.edu/pub/stat_ser/r314.pdf$>$}}

\bibitem[{Brito and Pearl(2002)}]{brito:pea02a}
\textsc{Brito, C.} and \textsc{Pearl, J.} (2002).
\newblock Generalized instrumental variables.
\newblock In \textit{Uncertainty in Artificial Intelligence, Proceedings of the
  Eighteenth Conference} (A.~Darwiche and N.~Friedman, eds.). Morgan Kaufmann,
  San Francisco, 85--93.

\bibitem[{Chen and Pearl(2014)}]{chen:pea14}
\textsc{Chen, B.} and \textsc{Pearl, J.} (2014).
\newblock Graphical tools for linear structural equation modeling.
\newblock Tech. Rep. R-432,
  {$<$http://ftp.cs.ucla.edu/pub/stat\_ser/r432.pdf$>$}, Department of Computer
  Science, University of California, Los Angeles, CA.
\newblock Forthcoming, Psychometrika.

\bibitem[{Chen et~al.(2014)Chen, Tian and Pearl}]{chen:etal14}
\textsc{Chen, B.}, \textsc{Tian, J.} and \textsc{Pearl, J.} (2014).
\newblock Testable implications of linear structual equation models.
\newblock In \textit{Proceedings of the Twenty-eighth AAAI Conference on
  Artificial Intelligence} (C.~E. Brodley and P.~Stone, eds.). AAAI Press,
  Palo, CA.
\newblock {$<$http://ftp.cs.ucla.edu/pub/stat\_ser/r428-reprint.pdf$>$}.

\bibitem[{Foygel et~al.(2012)Foygel, Draisma and Drton}]{foygel:12}
\textsc{Foygel, R.}, \textsc{Draisma, J.} and \textsc{Drton, M.} (2012).
\newblock Half-trek criterion for generic identifiability of linear structural
  equation models.
\newblock \textit{The Annals of Statistics} \textbf{40} 1682--1713.

\bibitem[{Huang and Valtorta(2006)}]{huang:val06}
\textsc{Huang, Y.} and \textsc{Valtorta, M.} (2006).
\newblock Pearl's calculus of intervention is complete.
\newblock In \textit{Proceedings of the Twenty-Second Conference on Uncertainty
  in Artificial Intelligence} (R.~Dechter and T.~Richardson, eds.). AUAI Press,
  Corvallis, OR, 217--224.

\bibitem[{Kenny and Milan(2012)}]{kenny2011identification}
\textsc{Kenny, D.~A.} and \textsc{Milan, S.} (2012).
\newblock Identification: A non-technical discussion of a technical issue.
\newblock In \textit{Handbook of Structural Equation Modeling} (G.~M. S.~W.
  R.~Hoyle, D.~Kaplan, ed.). Guilford Press, New York, 145--163.

\bibitem[{Pearl(2000)}]{pearl:2k}
\textsc{Pearl, J.} (2000).
\newblock \textit{Causality: Models, Reasoning, and Inference}.
\newblock Cambridge University Press, New York.
\newblock 2nd edition, 2009.

\bibitem[{Pearl(2009)}]{pearl:09}
\textsc{Pearl, J.} (2009).
\newblock \textit{Causality: Models, Reasoning, and Inference}.
\newblock 2nd ed. Cambridge University Press, New York.

\bibitem[{Shpitser(2008)}]{shpitser:08}
\textsc{Shpitser, I.} (2008).
\newblock \textit{Complete Identification Methods for Causal Inference}.
\newblock Ph.D. thesis, Computer Science Department, University of California,
  Los Angeles, CA.

\bibitem[{Shpitser and Pearl(2006)}]{shpitser:pea06c}
\textsc{Shpitser, I.} and \textsc{Pearl, J.} (2006).
\newblock Identification of conditional interventional distributions.
\newblock In \textit{Proceedings of the Twenty-Second Conference on Uncertainty
  in Artificial Intelligence} (R.~Dechter and T.~Richardson, eds.). AUAI Press,
  Corvallis, OR, 437--444.

\bibitem[{Tian(2002)}]{tian:02}
\textsc{Tian, J.} (2002).
\newblock \textit{Studies in Causal Reasoning and Learning}.
\newblock Ph.D. thesis, Computer Science Department, University of California,
  Los Angeles, CA.

\bibitem[{Tian(2005)}]{tian2005identifying}
\textsc{Tian, J.} (2005).
\newblock Identifying direct causal effects in linear models.
\newblock In \textit{Proceedings of the National Conference on Artificial
  Intelligence}, vol.~20. Menlo Park, CA; Cambridge, MA; London; AAAI Press;
  MIT Press; 1999.

\bibitem[{Tian(2007)}]{tian:07a}
\textsc{Tian, J.} (2007).
\newblock A criterion for parameter identification in structural equation
  models.
\newblock In \textit{Proceedings of the Twenty-Third Conference Annual
  Conference on Uncertainty in Artificial Intelligence (UAI-07)}. AUAI Press,
  Corvallis, Oregon.

\bibitem[{Tian(2009)}]{tian2009parameter}
\textsc{Tian, J.} (2009).
\newblock Parameter identification in a class of linear structural equation
  models.
\newblock In \textit{Proceedings of the Twenty-First International Joint
  Conference on Artificial Intelligence (IJCAI-09)}.

\bibitem[{Tian and Pearl(2002)}]{tian:pea02}
\textsc{Tian, J.} and \textsc{Pearl, J.} (2002).
\newblock A general identification condition for causal effects.
\newblock In \textit{Proceedings of the Eighteenth National Conference on
  Artificial Intelligence}. AAAI Press/The MIT Press, Menlo Park, CA, 567--573.

\bibitem[{Wright(1921)}]{wright:21}
\textsc{Wright, S.} (1921).
\newblock Correlation and causation.
\newblock \textit{Journal of Agricultural Research} \textbf{20} 557--585.

\end{thebibliography}
\bibliographystyle{ims}
\end{document}